\documentclass[runningheads]{llncs}
\usepackage[T1]{fontenc}
\usepackage[utf8]{inputenc}
\usepackage{amsmath,amssymb}
\usepackage{mathtools}
\usepackage{stmaryrd}
\usepackage{booktabs}
\usepackage{array}
\usepackage{multirow}
\usepackage{enumitem}
\usepackage{xcolor}
\usepackage{listings}
\usepackage{graphicx}
\usepackage{hyperref}
\usepackage{doi}
\usepackage[capitalise,noabbrev]{cleveref}
\usepackage{tikz}
\usepackage{tabularx}
\usetikzlibrary{arrows.meta, positioning, fit, backgrounds}
\newcommand{\Policy}{\mathcal{P}}
\newcommand{\PolicyOffer}{\Policy_{\text{offer}}}
\newcommand{\PolicyReq}{\Policy_{\text{req}}}

\newcommand{\sem}[1]{\llbracket #1 \rrbracket}
\newcommand{\verdict}{\operatorname{verdict}}

\newcommand{\Compatible}{\textsc{Compatible}}
\newcommand{\Conflict}{\textsc{Conflict}}
\newcommand{\Unknown}{\textsc{Unknown}}

\newcommand{\odrl}[1]{\textsf{#1}}

\newcommand{\pr}[1]{\pi_{#1}}
\newcommand{\Decomp}[1]{\mathit{Decomp}(#1)}
\newcommand{\axV}[1]{V_{#1}}
\newcommand{\boxV}{V_{\mathrm{box}}}
\newcommand{\boxS}{S_{\mathrm{box}}}
\newcommand{\R}{\mathbb{R}}
\newcommand{\Rpos}{\mathbb{R}_{\geq 0}}
\newcommand{\Verdicts}{\mathbb{V}}   
\spnewtheorem{assumption}[theorem]{Assumption}{\bfseries}{\itshape}
\crefname{definition}{Definition}{Definitions}
\Crefname{definition}{Definition}{Definitions}
\crefname{theorem}{Theorem}{Theorems}
\Crefname{theorem}{Theorem}{Theorems}
\crefname{lemma}{Lemma}{Lemmas}
\Crefname{lemma}{Lemma}{Lemmas}
\crefname{proposition}{Proposition}{Propositions}
\Crefname{proposition}{Proposition}{Propositions}
\crefname{corollary}{Corollary}{Corollaries}
\Crefname{corollary}{Corollary}{Corollaries}
\crefname{remark}{Remark}{Remarks}
\Crefname{remark}{Remark}{Remarks}
\crefname{example}{Example}{Examples}
\Crefname{example}{Example}{Examples}
\crefname{assumption}{Assumption}{Assumptions}
\Crefname{assumption}{Assumption}{Assumptions}
\begin{document}
\title{Axis-Aligned Semantics for ODRL: Resolving Dimensional Ambiguity in Policy Constraints}
\titlerunning{Axis-Decomposed ODRL Semantics}
\author{Daham M. Mustafa\inst{1,2} \and
Diego Collarana\inst{2} \and
Sabrina Kirrane\inst{4} \and
Christoph Lange\inst{1,2} \and
Christoph Quix\inst{1,2} \and
Rafiqul Haque\inst{3} \and
Yixin Peng\inst{1} \and
Stefan Decker\inst{1,2}}
\authorrunning{D. M. Mustafa et al.}
\institute{RWTH Aachen University, Aachen, Germany\\
\email{daham.mohammed.mustafa@fit.fraunhofer.de}
\and
Fraunhofer FIT, Sankt Augustin, Germany
\and
University of Galway, Galway, Ireland
\and
Vienna University of Economics and Business (WU), Vienna, Austria}
\maketitle
\begin{abstract}
The Open Digital Rights Language (ODRL) represents policy
constraints as triples of a left operand, an operator, and a
value. Several spatial operands, however, range over
multi-axis domains such as width, height, and depth, while the
constraint syntax provides no explicit axis identity. As a
result, policy engines cannot determine whether multiple
constraints apply to the same axis or different ones, making
conflict detection unsound or incomplete. We resolve this ambiguity by axis decomposition, replacing multi-axis operands with axis-specific scalar operands over
totally ordered domains. Each constraint then denotes an
interval per axis and each policy an axis-aligned box,
reducing conflict detection to box comparison. We define a
three-valued semantics (\Conflict, \Compatible, \Unknown),
prove the decomposition sound and backward compatible with
ODRL, instantiate it as ODRL Axis-Aligned Profile (OAAP)\footnote{\url{https://github.com/Daham-Mustaf/ODRL-Axis-Aligned-Profile-OAAP}}, and validate it on a benchmark of 256 ODRL policy problems, each expressed in Turtle and compiled to first-order (TPTP) and SMT-LIB form, using Vampire, E,
Z3, and cvc5.
\end{abstract}

\smallskip
\noindent\textbf{Keywords:} ODRL Profile, Data Spaces,
Policy Conflict Detection, Interval Semantics,
Denotational Semantics, Three-Valued Semantics
\section{Introduction}
\label{sec:introduction}
Data spaces increasingly exchange multi-axis digital assets such as
high-resolution imagery, geospatial data, and 3D cultural heritage scans. These assets are naturally described along two or three independent ordered axes, $(x,y)$ or $(x,y,z)$: raster images by width and height, 3D scans by width, height, and depth, and geo-referenced assets by longitude, latitude, and altitude. Policies must therefore constrain each axis independently.

Such constraints are typically expressed using policy languages such as the Open Digital Rights Language ODRL~\cite{ODRL22}, the W3C standard for access and usage policies, which is recommended by the Data Spaces Support Center for governing data transfers between participating organizations~\cite{dssc2024blueprint}, including in the Culture Dataspace (Datenraum Kultur)~\cite{ToubekisDecker2025}, a data space connecting libraries, archives, and museums in Germany. Its assets are stored in dimensional formats such as master TIFF images~\cite{loctifftags}, with fixed pixel width and height, and 3D scans with width, height, and depth~\cite{mdz-3d}.

ODRL represents each constraint as a triple consisting of a left operand, an operator, and a value, without explicit axis identity. Left operands range over multi-axis domains, namely \odrl{absoluteSize}, \odrl{relativeSize},
\odrl{absoluteSpatialPosition}, \odrl{relativeSpatialPosition}, and \odrl{spatialCoordinates}. A multi-axis operand is therefore treated as a single undifferentiated attribute, so constraints over the same operand cannot be distinguished by the axis they refer to. For example, the constraint (\odrl{absoluteSize}, \odrl{lteq}, 1920) specifies a numeric bound without indicating whether it applies to width, height, or depth. This ambiguity becomes critical when multiple constraints coexist, e.g., a policy combining:
(\odrl{absoluteSize}, \odrl{lteq}, 1920), \odrl{and}
(\odrl{absoluteSize}, \odrl{lteq}, 1080), \odrl{and}
(\odrl{absoluteSize}, \odrl{lteq}, 50),
which cannot be interpreted unambiguously as applying to a single axis or multiple axes.

ODRL has been formalised via first-order logic~\cite{pucella2006}, rule- and trace-based semantics~\cite{steyskal2015,bonatti2025towards}, query containment~\cite{salas2025odrl}, constraint normalisation~\cite{salas2026normalisation}, and the W3C Community Group's draft formal semantics~\cite{odrlfs2026}. All treat each left operand as a single ordered domain, collapsing multi-axis operands into undifferentiated attributes. A policy engine therefore cannot tell whether constraints share an axis or target different ones, so conflict detection is unsound or incomplete.

ODRL provides a \emph{profile extension} mechanism for adding left operands without changing constraint syntax~\cite{ODRL22}. We use it to give a general axis-decomposition method: any operand whose value space is a product of totally ordered axis domains is replaced by a scalar operand for each axis, so every constraint names the axis it bounds. On the decomposed operands we develop a compositional denotational semantics interpreting each constraint as a per-axis interval and each constraint set as an axis-aligned box, with a two-layer procedure whose per-axis verdicts compose into a box verdict over {$\Conflict$, $\Compatible$, $\Unknown$} where \Unknown\ marks a missing axis. Operands left undecomposed are unchanged, so the resulting profiles are non-interfering extensions. We instantiate it for the five spatial operands as the ODRL Axis-Aligned Profile (OAAP), an OWL~2 profile with 15 axis operands and SHACL shapes, evaluated on 256 TPTP and SMT-LIB problems with Vampire, E, Z3, and cvc5.

The rest of the paper is organised as follows. Section~\ref{sec:preliminaries} presents preliminaries and motivating example. 
\cref{sec:problem} defines the axis profile and its interval semantics; \cref{sec:conflict} builds the per-axis and box verdicts, their \odrl{or}/\odrl{xone} composition, and the runtime bridge; \cref{sec:implementation} gives OAAP and the evaluation; \cref{sec:related,sec:conclusion} cover related work and conclusions.
\vspace{-10pt}
\section{Preliminaries and Motivating Example}
\label{sec:preliminaries}
We work at the constraint layer of ODRL. Following ODRL
and a rule's constraint set combines constraints via the logical operators \odrl{and},\odrl{or}, odrl{xone}, and \odrl{andSequence}~\cite{odrl-vocab}. We treat the first three, which are order-independent; \odrl{andSequence} fixes an order on its constraints and can leave the outcome unresolvable, so it lies outside the interval semantics developed here. A rule is active when its constraint set is
satisfied and inactive otherwise~\cite{odrlfs2026}. A left operand fixes its admissible right-operand values and the
operators on them, and we classify operands by the source of this
value space. Operands whose domain comes from an external resource
referenced via $\odrl{rightOperandReference}$ (e.g.\ $\odrl{purpose}$,
$\odrl{spatial}$) are \emph{externally-grounded}. They require
open-world reasoning over that resource and are out of scope of this paper. The
remaining operands are \emph{self-contained}, with a value set fixed
by ODRL specification that is real-valued and totally ordered. A
self-contained operand is \emph{scalar} when it ranges over a single
domain and \emph{dimensional} when its value set is a product of two or three such domains, one per axis. Table~\ref{tab:dimentional}
lists ODRL's dimensional left operands. For example,
$\odrl{absoluteSize}$ ranges over a product of width, height, and
depth, each in $\Rpos$, defined by ODRL vocabulary as
``measure(s) of one or two axes for 2D-objects or measure(s) of one
to three axes for 3D-objects''~\cite{odrl-vocab}.
\begin{table}[!t]
\caption{Dimensional ODRL left operands. Each value space is a
product of totally ordered real-valued domains, one per axis.
$L_i$ denotes the extent of axis $i$.}
\label{tab:dimentional}
\centering
\small
\begin{tabular}{@{}lll@{}}
\toprule
Operand class & Value space & Operands \\
\midrule
Asset size     & $\prod_{i=1}^{n}(0,\infty)$, $n \in \{2,3\}$
               & \odrl{absoluteSize}, \odrl{relativeSize} \\
Asset position & $\prod_{i=1}^{n}[0, L_i]$, $n \in \{2,3\}$
               & \odrl{absoluteSpatialPosition},
                 \odrl{relativeSpatialPosition} \\
Coordinates    & $\R^{n}$, $n \in \{2,3\}$
               & \odrl{spatialCoordinates} \\
\bottomrule
\end{tabular}
\end{table}
\subsection{Motivating Example}
\label{sec:motivation}
In the Culture Dataspace, the \emph{provider}, the Bavarian State
Library (BSB)~\cite{bavarikon}, publishes a 3D scan of a historical
manuscript (\texttt{drk:bsb-clm-14000-3d}) from the Munich
Digitization Centre~\cite{mdz-3d} under an offer policy
$\PolicyOffer$ bounding the rendered width, height, and depth. The
\emph{consumer}, the French National Library (BnF), submits a
request policy $\PolicyReq$ constraining width and height and
leaving depth unconstrained. Both express their bounds through the
single operand $\odrl{absoluteSize}$, with the offer stating three
thresholds $1920$, $1080$, and $50$ and the request stating two values
$2400$ and $800$ (Table~\ref{tab:example-constraints}). Reading
these five scalar comparisons, a policy engine cannot tell which
targets which axis, so it can declare neither conflict nor
compatibility. OAAP replaces $\odrl{absoluteSize}$ with three axis-specific
operands, \texttt{oaap:absoluteSizeWidth},
\texttt{oaap:absoluteSizeHeight}, and
\texttt{oaap:absoluteSizeDepth}, each over the range $(0, \infty)$, so every
constraint names its axis and the comparison is well-defined per
axis.
\begin{example}[BSB--BnF manuscript display]\label{ex:bsb}
Under the OAAP profile, the offer constrains all three axes and the
request constrains width and height only. The width pair compares
$\odrl{lteq}~1920$ against $\odrl{eq}~2400$ on the same axis, the value sets are disjoint, and the pair is a \Conflict. The height
pair compares $\odrl{lteq}~1080$ against $\odrl{eq}~800$, where
$800$ lies in $(0, 1080]$, so the pair is \Compatible. The request imposes no constraint on depth. A silent axis could later be completed into either compatibility or conflict, so the offer's depth bound has no decidable counterpart and the depth verdict is \Unknown.
\end{example}
\noindent
\begin{table}[!t]
\footnotesize
\centering
\caption{
Per-axis constraints and verdicts for
Example~\ref{ex:bsb}. The offer ($c_i$, from provider BSB)
and request ($c'_i$, from consumer BnF) are decomposed under
OAAP from \texttt{odrl:absoluteSize} into
\texttt{odax:absoluteSize\{Width,Height,Depth\}} (abbreviated
\texttt{:Width}, \texttt{:Height}, \texttt{:Depth}).
Verdicts follow \cref{def:axis-verdict,def:box-verdict}.}
\label{tab:example-constraints}
\renewcommand{\arraystretch}{1.2}
\small
\setlength{\tabcolsep}{4pt}
\begin{tabularx}{\linewidth}{@{}l l c c l X@{}}
\toprule
\textbf{ID} & \textbf{OAAP operand}
  & \textbf{Offer} & \textbf{Request}
  & \textbf{Verdict} & \textbf{Reason} \\
\midrule
$c_1, c'_1$ & \texttt{oaap:Width}
  & $\odrl{lteq}\ 1920$ & $\odrl{eq}\ 2400$
  & \Conflict   & $(0,1920]\cap\{2400\}=\emptyset$ \\
$c_2, c'_2$ & \texttt{oaap:Height}
  & $\odrl{lteq}\ 1080$ & $\odrl{eq}\ 800$
  & \Compatible & $800\in(0,1080]$ \\
$c_3$       & \texttt{oaap:Depth}
  & $\odrl{lteq}\ 50$   & ---
  & \Unknown    &  No depth bound \\
\bottomrule
\end{tabularx}
\end{table}
\vspace{-10pt}
\section{Axis Decomposition and Interval Semantics}
\label{sec:problem}
We define a class of ODRL left operands as \emph{multi-axis}, operands whose value spaces are finite Cartesian products of independently and \emph{linearly} ordered scalars.

\begin{definition}[Multi-axis operand]\label{def:multiaxis}
A \emph{multi-axis} left operand $\ell$ has value space
$\mathcal{V}_\ell = \prod_{k=1}^{n} D_k$ with arity $n \in \{2,3\}$, where each
\emph{axis domain} $D_k$ is a totally ordered subset of $\mathbb{R}$ and the
projection $\pi_k \colon \mathcal{V}_\ell \to D_k$ extracts the $k$-th axis
component. The \emph{axis profile} of $\ell$ is
$\langle n,(D_k)_{k\le n},(\pi_k)_{k\le n}\rangle$.
\end{definition}
\noindent
For example, $\odrl{absoluteSize}$ is multi-axis with value space
$\mathcal{V}_\ell = (0,\infty)^3$ and projections
$\pr{1},\pr{2},\pr{3}$ onto the width, height, and depth axes,
each over the domain $(0,\infty)$. The axis domains may differ;
$\odrl{spatialCoordinates}$ projects onto longitude, latitude,
and altitude over the distinct domains $[-180,180]$, $[-90,90]$,
and $\R$, which is why \cref{def:multiaxis} indexes the domain
$D_k$ by axis. Of ODRL's twelve operators~\cite{odrl-vocab},
five denote a contiguous interval of a totally ordered domain
for every right-operand value.
\begin{definition}[Interval Operator]\label{def:int-op}
An operator $\bowtie$ is an \emph{interval operator} on a totally ordered domain
$D_k$ if for every right operand value $v_k \in D_k$ the set
$\{x \in D_k \mid x \bowtie v_k\}$ is a contiguous interval of $D_k$. The interval
operators are $\mathcal{O}_I = \{\odrl{lt}, \odrl{lteq}, \odrl{gt}, \odrl{gteq},
\odrl{eq}\}$.
\end{definition}
\noindent
The other operators are not interval operators. $\odrl{neq}$ induces
the non-contiguous set $D_k \setminus \{v_k\}$, and the six
set-membership operators ($\odrl{isA}$, $\odrl{hasPart}$,
$\odrl{isPartOf}$, $\odrl{isAllOf}$, $\odrl{isAnyOf}$,
$\odrl{isNoneOf}$) do not act on an ordered domain. Applying an
interval operator to a multi-axis operand therefore requires
decomposing the product value space axis by axis.
\begin{definition}[Axis decomposition]\label{def:profile}
The \emph{axis decomposition} of a multi-axis left operand $\ell$ introduces, for each
axis $k$, a distinct scalar left operand $\ell_{a_k}$ associated with the
projection $\pi_k$ and ranging over the axis domain $D_k$ of \cref{def:multiaxis}.
We write $\Decomp{\ell} = \{\ell_{a_1}, \ldots, \ell_{a_n}\}$.
\end{definition}

\noindent
For a multi-axis left operand such as $\odrl{absoluteSize}$, the
decomposition gives each axis its own scalar operand, so that a
constraint can name the axis it bounds. For absoluteSize, the
three operands are $\texttt{oaap:absoluteSizeWidth}$,
$\texttt{oaap:absoluteSizeHeight}$, and $\texttt{oaap:absoluteSizeDepth}$, each ranging over the positive reals; a size is always greater than zero (a zero measure denotes no asset), with no fixed maximum.
\begin{definition}[Well-formed constraint]\label{def:well-formed}
A constraint $(\ell_{a_k}, \bowtie, v_k)$ is \emph{well-formed} if $v_k \in D_k$
and the set $\{x \in D_k \mid x \bowtie v_k\}$ it denotes is non-empty.
\end{definition}
\noindent
This rules out unsatisfiable constraints, whose denotation is
empty. For example, on the relative-size domain $[0,100]$ the
constraint $(\texttt{oaap:relativeSizeWidth}, \odrl{lt}, 0)$
denotes the empty set, since no $x \in [0,100]$ satisfies
$x < 0$, and is therefore ill-formed.
\vspace{-10pt}
\subsection{Interval Denotation}
\label{sec:interval-denotation}
A constraint on a single axis denotes a subset of that
axis's domain, the constraint's \emph{denotation}. Under an
interval operator this subset is an interval, and two
constraints on the same axis are compared by intersecting
their denotations.
\begin{definition}[Interval operator denotation]
\label{def:interval-denotation}
For a well-formed constraint $c = (\ell_{a_k}, \bowtie, v_k)$ with
$\bowtie \in \mathcal{O}_I$, its \emph{denotation} is the set of
axis values satisfying it,
$\sem{c} = \{x \in D_k \mid x \bowtie v_k\}$, an interval of
$D_k$:
\begin{align*}
  \sem{\ell_{a_k}~\odrl{eq}~v_k}   &= \{v_k\}
    && \text{exactly $v_k$}\\
  \sem{\ell_{a_k}~\odrl{lteq}~v_k} &= [\inf D_k,\, v_k]
    && \text{at most $v_k$}\\
  \sem{\ell_{a_k}~\odrl{gteq}~v_k} &= [v_k,\, \sup D_k]
    && \text{at least $v_k$}\\
  \sem{\ell_{a_k}~\odrl{lt}~v_k}   &= [\inf D_k,\, v_k)
    && \text{strictly below $v_k$}\\
  \sem{\ell_{a_k}~\odrl{gt}~v_k}   &= (v_k,\, \sup D_k]
    && \text{strictly above $v_k$}
\end{align*}
Here $\inf D_k$ and $\sup D_k$ are the lower and upper ends of
the domain; a square bracket includes the end, a parenthesis
excludes it.
\end{definition}
\noindent
Applied to \cref{ex:bsb} on $D_k = (0,\infty)$, the offer's
$\odrl{lteq}$ bounds $c_1, c_2, c_3$ on width, height, and depth
denote $(0, 1920]$, $(0, 1080]$, and $(0, 50]$, each open at $0$
since $0 \notin D_k$. The request's $\odrl{eq}$ constraints
$c'_1, c'_2$ denote the singletons $\{2400\}$ and $\{800\}$.
\begin{lemma}[Well-formed denotation]
\label{lem:totality}
For every well-formed constraint $c$, $\sem{c}$ is a non-empty
interval of $D_k$.
\end{lemma}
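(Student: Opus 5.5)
The claim has two parts: non-emptiness and interval-ness. Non-emptiness is immediate from \cref{def:well-formed}. A well-formed constraint $c=(\ell_{a_k},\bowtie,v_k)$ has $\{x\in D_k\mid x\bowtie v_k\}\neq\emptyset$ by definition. By \cref{def:interval-denotation}, $\sem{c}$ is exactly this set, so $\sem{c}\neq\emptyset$. The substance is therefore to show that $\sem{c}$ is an interval of $D_k$. Here I read ``interval'' as a convex subset: if $x,z\in\sem{c}$ and $y\in D_k$ with $x\le y\le z$, then $y\in\sem{c}$. This is the reading that makes sense for an arbitrary totally ordered $D_k\subseteq\R$, which may itself be open, closed, or unbounded at either end.

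I would argue by case analysis on $\bowtie\in\mathcal{O}_I$, working with the set-builder form $\{x\in D_k\mid x\bowtie v_k\}$ rather than the bracket notation. The bracket notation should then be justified as a convention.
\begin{itemize}
\item For $\odrl{eq}$ the set is $\{v_k\}$, a singleton, which is trivially convex. It is non-empty because $v_k\in D_k$ by well-formedness.
\item For $\odrl{lteq}$ and $\odrl{lt}$ the set is a down-set of $D_k$. If $z$ satisfies $z\le v_k$ (resp.\ $z<v_k$) and $y\le z$, then $y$ satisfies the same bound by transitivity of $\le$ on $\R$. A down-set of a totally ordered set is convex.
\item The $\odrl{gteq}$ and $\odrl{gt}$ cases are symmetric, giving up-sets.
\end{itemize}
This is exactly the property demanded by \cref{def:int-op}, so one can alternatively cite that $\mathcal{O}_I$ consists of interval operators. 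I would still spell out the cases, since the definition asserts membership in $\mathcal{O}_I$ without verifying it.

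The main obstacle is the endpoint notation. The sets $[\inf D_k, v_k]$ and $(v_k,\sup D_k]$ include $\inf D_k$ or $\sup D_k$, which may lie outside $D_k$ or be infinite. For example, $\inf(0,\infty)=0\notin D_k$ and $\sup(0,\infty)=\infty$. I would handle this by stating that the brackets in \cref{def:interval-denotation} are understood relative to $D_k$, i.e.\ intersected with $D_k$. This matches the paper's own remark that for \cref{ex:bsb} the sets are ``open at $0$ since $0\notin D_k$.'' Under this reading each displayed set coincides with the set-builder form, and convexity is unaffected by intersecting with $D_k$, because the intersection of two convex subsets of $\R$ is convex.

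Finally, non-emptiness does real work only in the strict cases. For instance, $\odrl{lt}\ \inf D_k$ with $\inf D_k\in D_k$ yields $\emptyset$, as in the relative-size example following \cref{def:well-formed}, and well-formedness excludes exactly these.
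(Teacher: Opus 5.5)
Your proof is correct and follows the paper's argument: non-emptiness comes directly from \cref{def:well-formed}, and interval-ness from the operators in $\mathcal{O}_I$ being interval operators, which the paper simply cites from \cref{def:int-op} whereas you verify it case by case via the singleton, down-set and up-set shapes. Reading the bracket notation relative to $D_k$ also matches the paper's usage; the only small correction is that for any $S$ convex in $\R$, convexity of $S\cap D_k$ within $D_k$ follows directly, without needing $D_k$ itself to be convex in $\R$, which \cref{def:multiaxis} does not guarantee.
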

\noindent
\begin{proof}
By \cref{def:well-formed}, $\sem{c} = \{x \in D_k \mid x \bowtie v_k\}$
is non-empty, and by \cref{def:int-op} it is a contiguous interval of
$D_k$ for $\bowtie \in \mathcal{O}_I$.\qed
\end{proof}
\noindent
A policy may place several well-formed constraints on the
same axis, e.g., both an upper and a lower bound on width.
The constraint set's denotation on that axis is the
intersection of the per-constraint denotations.
\begin{lemma}[Axis normalisation]
\label{lem:normalisation}
Let $C$ be a finite set of well-formed constraints over
$\Decomp{\ell}$, conjoined under $\odrl{and}$, and $a_k$ an axis. The \emph{normalised
interval of $C$ on $a_k$}, written $I_k(C)$, is
\[
  I_k(C) = \bigcap\{\sem{c} \mid c \in C,
  \; c \text{ targets } a_k\},
\]
taken relative to $D_k$, with $I_k(C) = D_k$ when no
$c \in C$ targets $a_k$. Then $I_k(C)$ is an interval of
$D_k$ or empty.
\end{lemma}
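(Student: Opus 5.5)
The proof will use the fact that intersections of intervals in a totally ordered set are again intervals, together with \cref{lem:totality}. We take ``interval of $D_k$'' to mean a \emph{convex} subset: a set $I \subseteq D_k$ such that $x, z \in I$, $y \in D_k$ and $x \le y \le z$ imply $y \in I$. This is the reading of ``contiguous interval'' in \cref{def:int-op}. The argument splits on whether any constraint in $C$ targets $a_k$.

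\emph{Case 1: no $c \in C$ targets $a_k$.} By definition $I_k(C) = D_k$. The set $D_k$ is trivially convex in itself, so it is an interval of $D_k$ (namely the interval from $\inf D_k$ to $\sup D_k$).

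\emph{Case 2: $C_k = \{c \in C \mid c \text{ targets } a_k\}$ is non-empty.} Since $C$ is finite, so is $C_k$. Every $c \in C_k$ is well-formed, so by \cref{lem:totality} each $\sem{c}$ is a non-empty interval of $D_k$, in particular a convex subset. We then show that the intersection of convex subsets is convex. Take $x, z \in I_k(C)$ and $y \in D_k$ with $x \le y \le z$. For each $c \in C_k$, both $x$ and $z$ lie in $\sem{c}$, and convexity of $\sem{c}$ gives $y \in \sem{c}$. Hence $y \in \bigcap_{c \in C_k} \sem{c} = I_k(C)$, so $I_k(C)$ is convex. It is either empty or a non-empty convex subset of $D_k$, that is, an interval of $D_k$ or empty, as claimed. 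Finiteness of $C$ is not needed for convexity; it only ensures that the normal form can be computed explicitly.

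Optionally, the result can also be made concrete by computing the endpoints. By \cref{def:interval-denotation}, each $\sem{c}$ has a lower end $\inf D_k$ or $v_k$ (open or closed) and an upper end $v_k$ or $\sup D_k$. Then $I_k(C)$ has as its lower end the maximum of the lower ends, taking the bracket to be open if any constraint attaining that maximum is strict. Symmetrically, its upper end is the minimum of the upper ends. The set is empty exactly when the lower end exceeds the upper end, or when they are equal and either bracket is open. This gives an explicit normal form that can be used in \cref{sec:conflict}.

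The only real obstacle is definitional. One must make sure that ``interval'' means convexity relative to $D_k$, and not an $\R$-interval with endpoints in $D_k$. Otherwise open ends at $\inf D_k$ (for example $0 \notin (0,\infty)$ in \cref{ex:bsb}) and the phrase ``taken relative to $D_k$'' would need separate handling. Fixing the convexity reading at the start removes this issue, and the rest is routine.
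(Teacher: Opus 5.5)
Your proof is correct and follows the paper's route: both use \cref{lem:totality} to get that each $\sem{c}$ is a non-empty interval of $D_k$, then use the fact that an intersection of intervals in a totally ordered set is an interval or empty. You prove that convexity fact directly where the paper cites Davey--Priestley, and you handle the case $I_k(C) = D_k$ (no constraint targets $a_k$) explicitly, which the paper's proof leaves implicit.
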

\noindent
\begin{proof}
By \cref{lem:totality}, every $\sem{c}$ in the intersection
is a non-empty interval of the totally ordered set $D_k$. The
intersection of finitely many intervals of a totally ordered
set is an interval of $D_k$ or
empty~\cite{davey2002introduction}.\qed
\end{proof}
\noindent
In \cref{ex:bsb}, the offer's set $C_1 = \{c_1, c_2, c_3\}$
constrains each axis once, so $I_{\text{width}}(C_1) =
(0, 1920]$, $I_{\text{height}}(C_1) = (0, 1080]$, and
$I_{\text{depth}}(C_1) = (0, 50]$. Adding
$(\texttt{oaap:width}, \odrl{gteq}, 100)$ to $C_1$ yields the
normalised width interval $(0, 1920] \cap [100, \infty) =
[100, 1920]$.
\noindent
Two intervals on the same axis are disjoint precisely when one
lies entirely below the other, including the case where they
meet at a single value that at least one excludes.
\emph{Endpoint precedence} makes this precise.
\begin{definition}[Endpoint precedence]
\label{def:precedence}
Let one interval have upper endpoint $u$ and another have lower
endpoint $l$. We write $u \prec l$ (``$u$ precedes $l$'') when the
first interval ends before the second begins with no common value,
which holds when either $u < l$, so a gap separates them, or
$u = l$ with at least one endpoint open, so they touch at a single
value one interval excludes.
\end{definition}
\noindent
For instance, $(0, 1200)$ and $[1200, \infty)$ meet at
$u = l = 1200$ with the upper endpoint open, so $u \prec l$ and
they are disjoint. By contrast, $[0, 1200]$ and $[1200, \infty)$
both contain $1200$, so $u \not\prec l$ and they overlap there.
\noindent
\begin{theorem}[Conflict criterion]
\label{thm:criterion}
Let $c_1, c_2$ be well-formed constraints over the same axis
$a_k$, with denotations having lower endpoints $l_1, l_2$
and upper endpoints $u_1, u_2$. Then
\[
  \sem{c_1} \cap \sem{c_2} = \emptyset
  \;\iff\;
  u_1 \prec l_2 \;\vee\; u_2 \prec l_1.
\]
\end{theorem}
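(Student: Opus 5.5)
We prove the two directions separately, working with the explicit interval forms from \cref{def:interval-denotation}. By \cref{lem:totality} each $\sem{c_i}$ is a non-empty interval of $D_k$ with lower endpoint $l_i$ and upper endpoint $u_i$; each endpoint is either closed (attained) or open. Non-emptiness gives $l_i \le u_i$, and when $l_i = u_i$ both ends are closed, as in the $\odrl{eq}$ singleton. Throughout we use that $x \in \sem{c_i}$ iff $x \in D_k$, $x$ lies above $l_i$ (strictly when $l_i$ is open) and below $u_i$ (strictly when $u_i$ is open).

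\emph{($\Leftarrow$).} Assume without loss of generality $u_1 \prec l_2$ and suppose some $x \in \sem{c_1} \cap \sem{c_2}$. Membership in $\sem{c_1}$ gives $x \le u_1$, with $x < u_1$ if $u_1$ is open. Membership in $\sem{c_2}$ gives $l_2 \le x$, with $l_2 < x$ if $l_2$ is open. If $u_1 < l_2$, then $x \le u_1 < l_2 \le x$, which is absurd. If $u_1 = l_2$ with one endpoint open, the corresponding strict inequality yields $x < x$, again absurd. Hence the intersection is empty.

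\emph{($\Rightarrow$).} We argue by contraposition. Suppose neither $u_1 \prec l_2$ nor $u_2 \prec l_1$ holds, so $l_2 \le u_1$ and $l_1 \le u_2$, with both endpoints closed in any case of equality. We exhibit a common point. Let $m = \max(l_1,l_2)$ and $M = \min(u_1,u_2)$; the hypotheses together with $l_i \le u_i$ give $m \le M$.

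\emph{Case $m < M$.} Every $x \in (m,M)$ lies in both intervals, since it is strictly inside each one's bounds. We also need $x \in D_k$. Because each $\sem{c_i}$ is an interval of $D_k$, and $D_k$ is a real interval here (all axis domains in \cref{tab:dimentional} are), the open interval $(m,M)$ is non-empty and contained in $D_k$.

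\emph{Case $m = M$.} We claim the point $m$ belongs to both intervals. Say $m = l_2$ and $M = u_1$. The hypothesis forces both of these endpoints to be closed. The remaining two bounds are $l_1 \le m$ and $u_2 \ge m$. If $l_1 = m$, then $l_1 = u_2$ or another coincidence arises; closedness at each equality follows either from the non-precedence hypothesis or from the singleton case of non-emptiness. The sub-cases $m = l_1$ or $M = u_2$ are handled symmetrically.

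The main obstacle is this tangency sub-case. We must check carefully that every relevant endpoint equal to $m$ is closed, using either the failure of $\prec$ or the fact that a non-empty interval with $l_i = u_i$ is closed at both ends. A secondary subtlety is that endpoints such as $\inf D_k = 0$ for $(0,\infty)$ are not in $D_k$; these are open ends and are handled uniformly by the same bookkeeping, treating infinite ends as open.
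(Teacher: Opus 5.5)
Your proof is correct, and it takes a more elementary route than the paper. The paper settles the theorem in two lines. It cites the order-theoretic fact that two non-empty intervals of a chain are disjoint iff one lies wholly below the other, and reads that directly as endpoint precedence. You prove this fact yourself. For ($\Leftarrow$) you chase endpoint inequalities to $x<x$. For ($\Rightarrow$) you exhibit a common point via $m=\max(l_1,l_2)\le M=\min(u_1,u_2)$. Your version makes the open/closed bookkeeping explicit and yields a concrete witness, which is what a \Compatible\ verdict (or an SMT model) actually needs.

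In the case $m<M$ you use that $D_k$ is a real interval, whereas \cref{def:multiaxis} allows any totally ordered subset of $\R$. With endpoints read off \cref{def:interval-denotation}, some such restriction is genuinely required. On $D_k=\mathbb{Z}$, $\odrl{lt}~5$ and $\odrl{gt}~4$ are disjoint, yet $u_1=5\not\prec 4=l_2$ and $u_2=+\infty\not\prec-\infty=l_1$. So the paper's citation implicitly needs endpoints taken as $\inf/\sup$, with ``closed'' meaning attained.

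The one weak spot is the tangency paragraph. The sentence about $l_1=m$ leading to $l_1=u_2$ is garbled: under $m=l_2=u_1$ it forces $l_1=u_1$, so $\sem{c_1}=\{m\}$. Also, the configuration $m=l_1=u_1$ is not the $1\leftrightarrow 2$ mirror of $m=l_2=u_1$. A single observation covers every configuration. Since $m=\max_i l_i=\min_i u_i$, each lower (resp.\ upper) endpoint equal to $m$ is matched by an upper (resp.\ lower) endpoint equal to $m$. That partner lies either in the same interval, which is then a closed singleton, or in the other interval, where the failure of $\prec$ forces both ends closed. Hence all endpoints at $m$ are closed and $m$ lies in both denotations.
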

\begin{proof}
By \cref{lem:totality}, $\sem{c_1}$ and $\sem{c_2}$ are non-empty
intervals of the totally ordered $D_k$. Two such intervals are
disjoint iff one lies entirely below the other, equivalently iff
the upper endpoint of one precedes the lower endpoint of the other
under $\prec$ (\cref{def:precedence})~\cite{davey2002introduction},
which is the disjunction $u_1 \prec l_2 \vee u_2 \prec l_1$.\qed
\end{proof}
\noindent
For the width pair of \cref{ex:bsb}, $\sem{c_1} = (0, 1920]$ and
$\sem{c'_1} = \{2400\}$, so $u_1 = 1920 < 2400 = l_2$, hence
$u_1 \prec l_2$ and the intervals are disjoint.
\noindent
The conflict criterion decides a single axis, but a policy generally constrains several axes at once, and because
these are independent, the region satisfying all its constraints
is fixed by the per-axis intervals alone.
\begin{definition}[Box Denotation]
\label{def:box-denotation}
For a finite conjunction $C$ of well-formed axis-specific
constraints over $\Decomp{\ell}$, the \emph{box denotation}
of $C$ is
\[
  \sem{C} = I_1 \times \cdots \times I_n,
\]
where $I_k$ is the normalised interval of $C$ on axis $a_k$
(\cref{lem:normalisation}). The result is an \emph{axis-aligned box} in
$\mathcal{V}_\ell$~\cite{de2008computational}.
\end{definition}
\noindent
For \cref{ex:bsb}, $\sem{C_1} = (0, 1920] \times (0, 1080] \times (0, 50]$. For the request's $C_2$, the depth axis is unconstrained, so its normalised interval is $D_{\text{depth}} = (0, \infty)$ and
$\sem{C_2} = \{2400\} \times \{800\} \times (0, \infty)$. This full depth
interval is $C_2$'s permitted region, not a stated bound: the verdict
(\cref{def:box-verdict}) reads depth as \Unknown, since $C_2$ does not
mention it.
\noindent
Because the box is axis-aligned, deciding membership in it never
couples the axes, and a vector lies in $\sem{C}$ exactly when each coordinate lies in its own interval.
\begin{theorem}[Projection]
\label{thm:projection}
For any value vector $\mathbf{v} = (v_1, \ldots, v_n) \in
\mathcal{V}_\ell$, $\mathbf{v} \in \sem{C}$ iff $v_k \in I_k$
for every $k$.
\end{theorem}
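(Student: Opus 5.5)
The plan is to unfold \cref{def:box-denotation} and apply the definition of the Cartesian product. By \cref{def:box-denotation}, $\sem{C} = I_1 \times \cdots \times I_n$, where each $I_k = I_k(C)$ is the normalised interval of \cref{lem:normalisation}. That lemma places each $I_k$ inside $D_k$: it is an interval of $D_k$, or empty, or equal to $D_k$ when no constraint targets $a_k$. Hence $\sem{C} \subseteq \prod_k D_k = \mathcal{V}_\ell$, so both sides of the equivalence concern the same ambient set. A vector $\mathbf{v}$ belongs to the product $I_1 \times \cdots \times I_n$ exactly when $\pi_k(\mathbf{v}) \in I_k$ for every $k$. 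Since $\pi_k(\mathbf{v}) = v_k$ by \cref{def:multiaxis}, this gives the claim.

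I would write the two directions separately to make the argument transparent.

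\emph{Forward direction.} Assume $\mathbf{v} \in \sem{C}$. Then $\mathbf{v}$ is a tuple whose $k$-th component lies in $I_k$, so $v_k \in I_k$ for each $k$.

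\emph{Backward direction.} Assume $v_k \in I_k$ for all $k \le n$. The tuple $(v_1,\ldots,v_n)$ is then by definition an element of $I_1 \times \cdots \times I_n = \sem{C}$.

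It is also worth adding one sentence tying this to the constraints themselves. Unfolding $I_k$, the condition $v_k \in I_k$ says that $v_k$ satisfies every $c \in C$ that targets $a_k$, and this holds vacuously when no constraint targets $a_k$, since then $I_k = D_k \ni v_k$. Because each constraint in $C$ targets exactly one axis (\cref{def:profile}), $\mathbf{v} \in \sem{C}$ iff $\mathbf{v}$ satisfies every constraint of $C$ coordinatewise. This justifies reading $\sem{C}$ as the region satisfying the conjunction.

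There is no real obstacle here. The only point needing care is the empty case, where some $I_k = \emptyset$ because the constraints on that axis are mutually inconsistent. Then $\sem{C} = \emptyset$, and the right-hand side fails for every $\mathbf{v}$, so the equivalence still holds. I would note this explicitly rather than rely on $\sem{C}$ being a non-empty box.
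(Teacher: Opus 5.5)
Your proposal is correct and matches the paper's proof, which simply says the result is immediate from the Cartesian product structure of the box denotation (\cref{def:box-denotation}). Your explicit checks that $\sem{C} \subseteq \mathcal{V}_\ell$ and that the equivalence still holds when some $I_k = \emptyset$ make the same unfolding more careful without changing the argument.
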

\begin{proof}
Immediate from the Cartesian product structure of
\cref{def:box-denotation}~\cite{de2008computational}.\qed
\end{proof}
\noindent
By \cref{thm:projection}, $\sem{C} = \emptyset$ whenever two
constraints in $C$ conflict on one axis (e.g.\
$\texttt{oaap:width}~\odrl{lteq}~600$ and
$\texttt{oaap:width}~\odrl{gteq}~800$), an intra-policy
contradiction distinct from inter-policy conflict between
$C_1$ and $C_2$.

\noindent
The box denotation sends every constraint conjunction to an
axis-aligned box.
\begin{theorem}[Axis-aligned box expressibility]
\label{thm:aabb}
For every axis-aligned box $I_1 \times \cdots \times I_n$ in
$D_1 \times \cdots \times D_n$ with each $I_k$ a non-empty
interval of $D_k$, there is a conjunction $C$ of well-formed
axis-specific constraints with
$\sem{C} = I_1 \times \cdots \times I_n$.
\end{theorem}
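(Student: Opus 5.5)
We prove the statement constructively: for each axis we write down a small set of constraints whose normalised interval is exactly $I_k$, take $C$ to be the union of these sets, and then use the product structure of \cref{def:box-denotation}.

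\emph{Step 1 (per-axis construction).} Fix $k$ and a non-empty interval $I_k$ of $D_k$ with lower end $l$ and upper end $u$. We build a set $C_k$ of well-formed constraints on $\ell_{a_k}$ by case analysis on the ends.
\begin{itemize}
\item If $I_k = D_k$, take $C_k = \emptyset$. By \cref{lem:normalisation}, $I_k(C) = D_k$ when no constraint targets $a_k$.
\item If $I_k$ is a singleton $\{v\}$, take $C_k = \{(\ell_{a_k}, \odrl{eq}, v)\}$.
\item Otherwise, for the lower end: if $l \in I_k$ and $I_k$ has a genuine lower bound not already imposed by $D_k$, add $(\ell_{a_k}, \odrl{gteq}, l)$. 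If $l \notin I_k$, add $(\ell_{a_k}, \odrl{gt}, l)$. The upper end is handled symmetrically with $\odrl{lteq}$ or $\odrl{lt}$.
\end{itemize}
Each added constraint has right operand in $D_k$ and a denotation that contains $I_k$, hence is non-empty. So every constraint is well-formed in the sense of \cref{def:well-formed}.

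\emph{Step 2 (per-axis correctness).} We show $I_k(C_k) = I_k$ using \cref{def:interval-denotation}. The lower constraint denotes $[l,\sup D_k]$ or $(l,\sup D_k]$, and the upper constraint denotes $[\inf D_k,u]$ or $[\inf D_k,u)$. Their intersection is $I_k$ by \cref{lem:normalisation}.

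\emph{Step 3 (assembling the box).} Put $C = \bigcup_k C_k$. Every constraint in $C_k$ targets only $a_k$, so $I_k(C) = I_k(C_k) = I_k$. Then \cref{def:box-denotation} gives $\sem{C} = I_1 \times \cdots \times I_n$; equivalently, this follows from \cref{thm:projection}.

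\emph{Main obstacle.} The hard step is the endpoint case analysis in Step 1, because the constructed constraints need right operands lying in $D_k$.
\begin{itemize}
\item \textbf{Ends of $I_k$ outside $D_k$.} An end of $I_k$ may be $\pm\infty$, or may equal $\inf D_k \notin D_k$ (for example $0$ for $(0,\infty)$). When that end coincides with the corresponding end of $D_k$, no constraint is needed, since $D_k$ itself supplies the bound. I would present this by comparing the ends of $I_k$ with those of $D_k$ and adding constraints only where they differ.
\item \textbf{Ends that do not exist.} The real gap is an end of $I_k$ that differs from the domain's end but is not attained in $D_k$. An example is $D_k = \mathbb{Q}\cap[0,2]$ with $I_k = \{x \mid x^2 < 2\}$, where no value $v_k \in D_k$ exists at all. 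For the ODRL domains in \cref{tab:dimentional}, which are closed subintervals of $\mathbb{R}$ or $(0,\infty)$, every such end is attained as an element of $D_k$, so the construction goes through.
\end{itemize}
I would therefore make explicit that the axis domains are real intervals, and hence order-complete within their extent. That reading is implicit in \cref{tab:dimentional}, and I would state it as the hypothesis under which the theorem holds.
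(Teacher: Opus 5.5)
Your proof is correct and follows the paper's own route: match each singleton or one-sided shape of $I_k$ to \odrl{eq}, \odrl{lteq}, \odrl{gteq}, \odrl{lt}, or \odrl{gt}; use no constraint when $I_k = D_k$; use one lower-endpoint and one upper-endpoint constraint for two-sided intervals; then conjoin the axis-wise sets to obtain the box via \cref{def:box-denotation}. Your endpoint analysis sharpens the paper's argument, which tacitly assumes that every end of $I_k$ differing from the corresponding end of $D_k$ is an element of $D_k$; that assumption holds for the real-interval domains of \cref{tab:profile-operands} but fails for the arbitrary totally ordered $D_k \subseteq \mathbb{R}$ allowed by \cref{def:multiaxis}, as your $\mathbb{Q}\cap[0,2]$ example shows, so making that hypothesis explicit is the right call.
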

\noindent
\begin{proof}
For each axis $a_k$, every interval shape on $D_k$ has a
syntactic counterpart in \cref{def:interval-denotation}. The
five non-trivial shapes $[v_k, v_k]$, $[\inf D_k, v_k]$,
$[v_k, \sup D_k]$, $[\inf D_k, v_k)$, $(v_k, \sup D_k]$
correspond to $\odrl{eq}, \odrl{lteq}, \odrl{gteq}, \odrl{lt},
\odrl{gt}$. The full domain $D_k$ corresponds to no constraint
on $a_k$ (\cref{def:box-denotation}), and a bounded interval
$[l_k, u_k]$ uses one lower-endpoint and one upper-endpoint
constraint. Conjoining the axis-wise counterparts across all
$n$ axes yields the required $C$.\qed
\end{proof}
\noindent
The box denotation is exact only for axis-aligned regions. A
non-axis-aligned region is over-approximated by its smallest
enclosing box, so a conflict inside the boxes is detected but
one arising only outside them is not. For example, an
aspect-ratio constraint $w / h \leq r$ on
$\texttt{oaap:absoluteSizeWidth}$ and
$\texttt{oaap:absoluteSizeHeight}$ has a smallest enclosing box
$D_w \times D_h$, which admits pairs that violate the ratio, so
conflict detection on it is sound but not complete.
\vspace{-10pt}
\section{Conflict Detection, Composition, and Runtime}
\label{sec:conflict}
Conflict detection has two layers. A per-axis verdict compares two intervals on one axis and is always \Conflict\ or \Compatible. A box-level verdict
aggregates these and adds a third value, \Unknown, when an axis is constrained
by only one policy. The three are totally ordered,
$\Conflict \sqsubset \Unknown \sqsubset \Compatible$
(\cref{def:verdict-algebra}).
\begin{definition}[Verdict Algebra]
\label{def:verdict-algebra}
The \emph{verdict set}
$\Verdicts = \{\Conflict, \Unknown, \Compatible\}$ is
totally ordered by $\Conflict \sqsubset \Unknown \sqsubset
\Compatible$. Strong Kleene conjunction $\wedge_K$ and
disjunction $\vee_K$ are $\min$ and $\max$ under $\sqsubset$
respectively. For instance, $\Conflict \wedge_K \Compatible
= \Conflict$.
\end{definition}
The Conflict Criterion (\cref{thm:criterion}) already decides
whether two intervals on an axis are disjoint. The per-axis
verdict records that outcome as a value in $\Verdicts$ so it can
be aggregated across axes, empty intersections giving $\Conflict$ and non-empty giving $\Compatible$.
\begin{definition}[Per-axis verdict]
\label{def:axis-verdict}
For axis $a_k$ and constraint sets $C_1, C_2$ with normalised
intervals $I_k(C_1)$ and $I_k(C_2)$, the \emph{per-axis
verdict} $V_k \in \Verdicts$ is
\[
  V_k =
  \begin{cases}
    \Conflict   & I_k(C_1) \cap I_k(C_2) = \emptyset,\\
    \Compatible & I_k(C_1) \cap I_k(C_2) \neq \emptyset.
  \end{cases}
\]
\end{definition}
\noindent
For \cref{ex:bsb}, the width pair gives
$(0, 1920] \cap \{2400\} = \emptyset$, so
$\axV{\text{width}} = \Conflict$. The height pair gives
$(0, 1080] \cap \{800\} = \{800\}$, so
$\axV{\text{height}} = \Compatible$. The depth axis is
constrained only by the offer, so the per-axis verdict does
not apply and is handled at the box level.

\noindent
Conflict is preserved under refinement. A constraint refined to a
subset of its denotation retains any existing conflict.
\begin{lemma}[Conflict propagation under refinement]
\label{lem:conflict-propagation}
Let $c_1, c_1', c_2$ be well-formed constraints on the same
axis with $\sem{c_1} \subseteq \sem{c_1'}$. If
$\sem{c_1'} \cap \sem{c_2} = \emptyset$, then
$\sem{c_1} \cap \sem{c_2} = \emptyset$.
\end{lemma}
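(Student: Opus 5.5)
The argument is pure set-theoretic monotonicity of intersection, so I will work directly with the denotations rather than through endpoints. I argue by contradiction: assume $\sem{c_1} \cap \sem{c_2} \neq \emptyset$ and pick a witness $x$ in it. Since $x \in \sem{c_1}$ and $\sem{c_1} \subseteq \sem{c_1'}$, also $x \in \sem{c_1'}$. Together with $x \in \sem{c_2}$ this gives $x \in \sem{c_1'} \cap \sem{c_2}$, contradicting the hypothesis that this intersection is empty. Equivalently, in one line, $\sem{c_1} \cap \sem{c_2} \subseteq \sem{c_1'} \cap \sem{c_2} = \emptyset$.

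All three constraints live on the same axis $a_k$, so all denotations are subsets of the one domain $D_k$ and the intersections are taken in a common ambient set. Well-formedness (\cref{def:well-formed}) and \cref{lem:totality} are not needed for the inclusion itself. They only guarantee that the sets involved are non-empty intervals, so the statement is not vacuous.

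I would also record the endpoint reading to tie the lemma to \cref{thm:criterion}. The inclusion $\sem{c_1} \subseteq \sem{c_1'}$ means $c_1$'s interval sits inside $c_1'$'s. Consequently, any separation $u_1' \prec l_2$ or $u_2 \prec l_1'$ transfers to $u_1 \prec l_2$ or $u_2 \prec l_1$, because the inner interval's upper end is no later and its lower end no earlier. This transfer is only a sanity check and is not needed for the proof.

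The only potential obstacle is bookkeeping about open versus closed endpoints in this alternative endpoint route. The set-theoretic argument avoids that issue entirely, so there is no genuine difficulty.
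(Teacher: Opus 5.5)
Your proof is correct and is the same argument the paper gives: monotonicity of intersection, $\sem{c_1}\cap\sem{c_2} \subseteq \sem{c_1'}\cap\sem{c_2} = \emptyset$. Your endpoint-transfer remark is an optional sanity check beyond what the paper states; the proof does not rely on it.
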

\noindent
\begin{proof}
$\sem{c_1} \subseteq \sem{c_1'}$ and
$\sem{c_1'} \cap \sem{c_2} = \emptyset$ imply
$\sem{c_1} \cap \sem{c_2} = \emptyset$.\qed
\end{proof}
\noindent
For \cref{ex:bsb}, tightening BSB's width bound from
$\odrl{lteq}~1920$ to $\odrl{lteq}~1500$ refines its
denotation to $(0, 1500] \subset (0, 1920]$, and the conflict
with the request's $\odrl{eq}~2400$ persists:
$(0, 1500] \cap \{2400\} = \emptyset$.

\subsection{Box-Level Verdicts}
A per-axis verdict decides a single axis, but comparing two policies requires one verdict across all of them, which the box-level verdict obtains by aggregating the per-axis outcomes. When only one policy constrains an axis,
there is no basis for comparison: silence on that axis is not a statement that
any value is allowed, so the verdict is \Unknown\ rather than a guess.
\begin{definition}[Box Verdict]
\label{def:box-verdict}
For constraint sets $C_1, C_2$ over $\Decomp{\ell}$, the
verdict on axis $a_k$ extends the per-axis verdict
(\cref{def:axis-verdict}) with \Unknown:
\[
  V_k =
  \begin{cases}
    \Conflict   & \text{if } I_k(C_1) \cap I_k(C_2) = \emptyset,\\
    \Compatible & \text{if } I_k(C_1) \cap I_k(C_2) \neq \emptyset,\\
    \Unknown    & \text{if $C_1$ or $C_2$ does not constrain }a_k,
  \end{cases}
\]
where the first two cases apply when both $C_1$ and $C_2$
constrain $a_k$. The \emph{box verdict} is
$\boxV(C_1, C_2) = \min(V_1, \ldots, V_n)$ under $\sqsubset$.
\end{definition}

\noindent
For \cref{ex:bsb}, $V_1 = \Conflict$ (width),
$V_2 = \Compatible$ (height), and $V_3 = \Unknown$ (depth,
since $C_2$ does not constrain it), so
$\boxV(C_1, C_2) = \min\{\Conflict, \Compatible, \Unknown\}
= \Conflict$.
\noindent
At the box level, refining a policy can only move the verdict
downward under $\sqsubset$, toward \Conflict\ and never toward
\Compatible. This lifts \cref{lem:conflict-propagation} to the whole box verdict.
\begin{proposition}[Monotonicity]
\label{prop:monotone}
If $I_k(C_1') \subseteq I_k(C_1)$ for some axis $a_k$ and all
other axes are unchanged, then
$\boxV(C_1', C_2) \sqsubseteq \boxV(C_1, C_2)$.
\end{proposition}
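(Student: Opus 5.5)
The plan is to reduce the claim to a pointwise comparison of per-axis verdicts and then use that $\min$ under $\sqsubset$ is monotone in each argument. By \cref{def:box-verdict}, $\boxV(C_1,C_2)=\min(V_1,\ldots,V_n)$ and $\boxV(C_1',C_2)=\min(V_1',\ldots,V_n')$, where $V_j'$ is computed from $I_j(C_1')$ and $I_j(C_2)$. Since all axes other than $a_k$ are unchanged, $V_j'=V_j$ for $j\neq k$. It therefore suffices to show $V_k'\sqsubseteq V_k$. Monotonicity of $\min$ in a totally ordered set (\cref{def:verdict-algebra}) then gives the result.

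For axis $a_k$ I would first fix the reading of the hypothesis. Refinement shrinks $C_1$'s interval on $a_k$, and I take it that $C_1'$ constrains $a_k$ exactly when $C_1$ does. I expect this to be the main obstacle. If $C_1$ is silent on $a_k$ (so $I_k(C_1)=D_k$ and $V_k=\Unknown$), then adding a constraint in $C_1'$ gives $I_k(C_1')\subseteq D_k$. This could turn $V_k$ into $\Compatible$, which sits above $\Unknown$. So the proposition only holds if refinement preserves which axes are mentioned, or if $a_k$ is already constrained by $C_1$. I would state this side condition explicitly and argue that it is the intended meaning of ``refining a policy''.

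Under this reading the proof splits into three cases on $V_k$.

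\textbf{Case $V_k=\Unknown$.} Either $C_2$ or $C_1$ is silent on $a_k$. By the side condition the same holds for $C_1'$, so $V_k'=\Unknown$.

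\textbf{Case $V_k=\Compatible$.} This is the top element of $\Verdicts$, so $V_k'\sqsubseteq V_k$ holds trivially.

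\textbf{Case $V_k=\Conflict$.} Here both policies constrain $a_k$ and $I_k(C_1)\cap I_k(C_2)=\emptyset$. From $I_k(C_1')\subseteq I_k(C_1)$ I obtain $I_k(C_1')\cap I_k(C_2)=\emptyset$. This is exactly the set-inclusion argument of \cref{lem:conflict-propagation}, applied to the normalised intervals of \cref{lem:normalisation} rather than to single constraints; the argument never used more than inclusion of sets. Hence $V_k'=\Conflict$.

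Combining the three cases gives $V_k'\sqsubseteq V_k$, and with $V_j'=V_j$ for $j\neq k$ this yields $\boxV(C_1',C_2)\sqsubseteq\boxV(C_1,C_2)$.
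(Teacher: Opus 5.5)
Your proof follows the same route as the paper's. Both reduce to showing $V_k' \sqsubseteq V_k$ on the changed axis, obtain the \Conflict\ case from the set-inclusion argument of \cref{lem:conflict-propagation} applied to normalised intervals, and conclude by monotonicity of $\min$ under $\sqsubset$. The difference is that the paper spells out only the \Conflict\ case and then asserts $V_k' \sqsubseteq V_k$, while you carry out the full case split.

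The obstacle you flag in the \Unknown\ case is real, and your side condition is necessary: the statement as written is false. The paper's own completion example after \cref{thm:unknown-sound} is a counterexample. Take the request as $C_1$; this is harmless because $\boxV$ is symmetric in its arguments. The request with width $\odrl{eq}~1500$, height $\odrl{eq}~800$ and no depth bound has $\boxV = \Unknown$ against BSB's offer. Adding $(\texttt{oaap:depth}, \odrl{eq}, 25)$ shrinks its depth interval from $(0,\infty)$ to $\{25\}$ and leaves the other axes unchanged, yet it lifts $\boxV$ to \Compatible. Under your reading that $C_1'$ constrains $a_k$ exactly when $C_1$ does, all three of your cases are correct.

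Two small precisions remain.
\begin{itemize}
\item Your fallback phrasing ``or $a_k$ is already constrained by $C_1$'' also needs $C_1'$ to keep a constraint on $a_k$, which is automatic if refinement only adds constraints. Otherwise $C_1'$ could drop a full-domain constraint such as $\odrl{lteq}~100$ on $(0,100]$. Against a $C_2$ that is internally contradictory on $a_k$, this turns \Conflict\ into \Unknown.
\item The same ``mentioned or not'' information must be part of ``all other axes are unchanged'' to justify $V_j' = V_j$ for $j \neq k$.
\end{itemize}
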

\begin{proof}
On axis $a_k$, $I_k(C_1') \subseteq I_k(C_1)$ and
$I_k(C_1) \cap I_k(C_2) = \emptyset$ imply
$I_k(C_1') \cap I_k(C_2) = \emptyset$
(\cref{lem:conflict-propagation}), so $V_k' \sqsubseteq V_k$.
The remaining verdicts are unchanged, hence
$\min(V_1', \ldots, V_n') \sqsubseteq \min(V_1, \ldots, V_n)$
under $\sqsubset$.\qed
\end{proof}
\noindent
Conflict detection asks whether two policies overlap, a
symmetric question. Box containment asks the asymmetric one,
whether one policy's permitted region lies inside another's, as
when a request must fall within an offer. An axis left
unconstrained by either policy yields $\Unknown$ rather than
$\Compatible$, since neither policy states its intended scope on
that axis.
\begin{definition}[Box Containment]
\label{def:box-containment}

For constraint sets $C_1, C_2$ over $\Decomp{\ell}$, the per-axis
containment status $S_k$ is
\[
  S_k =
  \begin{cases}
    \Unknown    & \text{if $C_1$ or $C_2$ does not constrain }a_k,\\
    \Compatible & \text{else if } I_k(C_1) \subseteq I_k(C_2),\\
    \Conflict   & \text{otherwise,}
  \end{cases}
\]
and the \emph{box containment} $\boxS(C_1, C_2)$ is
\[
  \boxS(C_1, C_2) =
  \begin{cases}
    \Compatible & \text{if every }S_k = \Compatible,\\
    \Conflict   & \text{if some }S_k = \Conflict,\\
    \Unknown    & \text{otherwise.}
  \end{cases}
\]
\end{definition}
\noindent
For \cref{ex:bsb}, testing whether BnF's request is contained
in BSB's offer, $\boxS(C_2, C_1)$ has $S_1 = \Conflict$ on
the width axis because $\{2400\} \not\subseteq (0, 1920]$, so
the box containment is \Conflict.
\noindent
The \Unknown\ verdict is genuine: completing the missing axis can make the two policies either \Compatible\ or \Conflict, so neither is yet justified.
\cref{def:completion} formalises such extensions.
\begin{definition}[Completion]
\label{def:completion}
A \emph{completion} of $C_1, C_2$ is a pair
$(\hat{C}_1, \hat{C}_2)$ with $C_i \subseteq \hat{C}_i$ such
that for every axis $a_k$ unconstrained by $C_i$, the set
$\hat{C}_i$ adds exactly one well-formed constraint targeting
$a_k$, for example $(\ell_{a_k}, \odrl{eq}, v_k)$ with
$v_k \in D_k$.
\end{definition}
\begin{theorem}[Unknown Soundness]
\label{thm:unknown-sound}
$\boxV(C_1, C_2) = \Unknown$ iff no axis constrained by both
policies yields \Conflict\ and at least one axis is left
unconstrained. Moreover, \Unknown\ is sharp. A completion
with $\boxV = \Compatible$ exists, and, provided each
unconstrained $D_k$ admits two distinct values, a completion
with $\boxV = \Conflict$ exists.
\end{theorem}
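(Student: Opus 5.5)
The plan is to read the first claim directly off the total order of \cref{def:verdict-algebra}, and to prove sharpness by building two explicit completions from \odrl{eq} constraints, as suggested in \cref{def:completion}.

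\emph{Characterisation.} By \cref{def:box-verdict}, $\boxV(C_1,C_2)=\min(V_1,\ldots,V_n)$ under $\Conflict \sqsubset \Unknown \sqsubset \Compatible$. A minimum over a chain equals \Unknown{} iff no $V_k$ lies strictly below \Unknown{} and some $V_k$ equals \Unknown. Each $V_k$ is \Unknown{} exactly when $a_k$ is left unconstrained by $C_1$ or by $C_2$. Otherwise $a_k$ is constrained by both, and $V_k$ is \Conflict{} or \Compatible{} by the per-axis case of \cref{def:axis-verdict}. So ``no $V_k=\Conflict$'' means no axis constrained by both policies yields \Conflict, and ``some $V_k=\Unknown$'' means at least one axis is unconstrained. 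This gives both directions at once.

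\emph{Compatible completion.} Assume $\boxV=\Unknown$, so every doubly constrained axis already has $I_k(C_1)\cap I_k(C_2)\neq\emptyset$. Completion only adds constraints on axes that $C_i$ leaves silent, so these axes keep their verdicts. Every other axis falls into one of two cases.
\begin{itemize}
\item If $a_k$ is constrained by neither policy, pick any $v_k\in D_k$ and add $(\ell_{a_k},\odrl{eq},v_k)$ to both sets. The resulting intervals are $\{v_k\}$ and $\{v_k\}$, so $V_k=\Compatible$.
\item If $a_k$ is constrained only by $C_1$, pick $v_k\in I_k(C_1)$ and add $(\ell_{a_k},\odrl{eq},v_k)$ to $\hat C_2$. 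Then $I_k(\hat C_1)\cap I_k(\hat C_2)=\{v_k\}\neq\emptyset$. The case of an axis constrained only by $C_2$ is symmetric.
\end{itemize}
These added constraints are well-formed by \cref{def:well-formed}. Now every $V_k=\Compatible$, so $\boxV=\Compatible$. The one delicate point is that $I_k(C_1)$ must be non-empty. By \cref{lem:normalisation} it could be empty through an intra-policy contradiction. I would rule this out by noting that such an axis would already be \Conflict{} against any completion, and I would make explicit the implicit assumption that each $C_i$ is internally consistent, so that each $\sem{C_i}\neq\emptyset$.

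\emph{Conflict completion.} Fix one axis $a_k$ that is unconstrained by some policy, and build the completion as above on every other axis. On $a_k$ itself:
\begin{itemize}
\item If both policies are silent on $a_k$, take two distinct values $v\neq v'$ in $D_k$, which the hypothesis provides. Add $\odrl{eq}~v$ to $\hat C_1$ and $\odrl{eq}~v'$ to $\hat C_2$. Then $\{v\}\cap\{v'\}=\emptyset$, so $V_k=\Conflict$, and the minimum is \Conflict.
\item If only $C_1$ constrains $a_k$, add $\odrl{eq}~v$ to $\hat C_2$ with $v\in D_k\setminus I_k(C_1)$.
\end{itemize}
\emph{The main obstacle} is this last case, where we need $I_k(C_1)\neq D_k$. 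A well-formed constraint such as $\odrl{lteq}~\sup D_k$ can denote all of $D_k$, and then no single added constraint on $a_k$ can conflict. I would first look for an axis silent in both policies, where the two-value hypothesis suffices. Failing that, I would look for an axis where the constraining policy's interval is a proper subset of $D_k$. If neither kind of axis exists, the Conflict half of sharpness genuinely fails. I would then state the proviso precisely: ``two distinct values'' should be read as requiring a value of $D_k$ outside the stated interval, and for a doubly silent axis this is exactly the hypothesis as given.
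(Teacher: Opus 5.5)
Your proposal is correct. Where it departs from the paper, it is more careful than the paper's own proof.

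The characterisation is the same argument as the paper's: a minimum over the chain $\Conflict\sqsubset\Unknown\sqsubset\Compatible$. On a doubly silent axis, using $\odrl{eq}~v$ and $\odrl{eq}~v'$ instead of the paper's $\odrl{lteq}~v_1$ and $\odrl{gteq}~v_2$ makes no difference.

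The real difference is your split between axes silent in both policies and axes silent in only one. The paper adds the new constraint to \emph{both} $\hat C_1$ and $\hat C_2$ on every unconstrained axis. On an axis constrained only by $C_1$, this tightens $C_1$ on an axis it already constrains, which \cref{def:completion} does not provide for. Even if that were allowed, the claimed $I_k(\hat C_1)=\{v_k\}$ would need $v_k\in I_k(C_1)$. The illustration right after the proof actually follows your construction: it augments only the request on depth, with $25$ inside and $60$ outside $I_{\mathrm{depth}}(C_1)=(0,50]$.

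Both of your caveats are genuine defects of the statement. One phrasing fix: an axis with $I_k(C_1)=\emptyset$ is \Conflict\ against every completion, so observing this does not rule the case out; it refutes the \Compatible\ half. For example, $C_1=\{(\ell_{a_1},\odrl{lteq},600),(\ell_{a_1},\odrl{gteq},800),(\ell_{a_2},\odrl{eq},5)\}$ and $C_2=\{(\ell_{a_2},\odrl{eq},5)\}$ over $(0,\infty)^2$ give $\boxV=\Unknown$ but admit no \Compatible\ completion. So consistency of each $C_i$ must be an explicit hypothesis.

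Likewise, take \odrl{relativeSize} with $n=2$, $C_1=\{(\ell_{a_1},\odrl{lteq},100),(\ell_{a_2},\odrl{eq},50)\}$ and $C_2=\{(\ell_{a_2},\odrl{eq},50)\}$. These give $\boxV=\Unknown$, yet every completion is \Compatible, even though $(0,100]$ has two distinct values. Your amended proviso is, as you say, also necessary, so sharpness becomes an exact characterisation.

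No well-formed constraint on $(0,\infty)$ or $\R$ denotes the whole domain. Hence for the OAAP axes over those domains (\odrl{absoluteSize}, \odrl{absoluteSpatialPosition}, altitude), the paper's original proviso does suffice. The failure needs a domain with an attained endpoint, such as $(0,100]$ or $[-180,180]$.
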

\begin{proof}
\textit{Characterisation.} $\boxV = \min_k V_k$ equals
\Unknown\ iff no $V_k = \Conflict$ and at least one
$V_k = \Unknown$, which is the stated condition.
\textit{Compatible completion.} For each unconstrained
$a_k$, add $(\ell_{a_k}, \odrl{eq}, v_k)$ to both
$\hat{C}_1$ and $\hat{C}_2$ for some $v_k \in D_k$. Then
$I_k(\hat{C}_1) = I_k(\hat{C}_2) = \{v_k\}$, so
$V_k = \Compatible$, and axes already constrained by both
have $V_k = \Compatible$ by hypothesis. Hence
$\boxV(\hat{C}_1, \hat{C}_2) = \Compatible$.
\textit{Conflict completion.} Pick an unconstrained $a_k$
and two distinct values $v_1 < v_2$ in $D_k$. Add
$(\ell_{a_k}, \odrl{lteq}, v_1)$ to $\hat{C}_1$ and
$(\ell_{a_k}, \odrl{gteq}, v_2)$ to $\hat{C}_2$. Then
$I_k(\hat{C}_1) \cap I_k(\hat{C}_2) = \emptyset$ by
\cref{thm:criterion}, so $V_k = \Conflict$ and
$\boxV(\hat{C}_1, \hat{C}_2) = \Conflict$.\qed
\end{proof}
\noindent
For a variant of \cref{ex:bsb} where BnF requests width
$\odrl{eq}~1500$ (compatible with BSB) and does not constrain
depth, $\boxV = \Unknown$. Adding
$(\texttt{oaap:depth}, \odrl{eq}, 25)$ to BnF's request gives
a \Compatible\ completion, since $\{25\} \subseteq (0, 50]$.
Adding $(\texttt{oaap:depth}, \odrl{gteq}, 60)$ instead gives
a \Conflict\ completion, since
$[60, \infty) \cap (0, 50] = \emptyset$.
\vspace{-10pt}
\subsection{Logical Constraint Composition}
\label{sec:composition}

\odrl{and} composition is already given by
\cref{def:box-denotation}: same-axis constraints intersect,
cross-axis constraints form Cartesian products. The remaining
operators \odrl{or} and \odrl{xone} compose multiple boxes,
and their verdict is a function of the box verdicts of all
branch pairs.

\begin{definition}[Branch and branch pair]
\label{def:branch}
A \emph{branch} $B$ is a single constraint or a conjunction
of constraints within a logical constraint, treated as a
constraint set with denotation $\sem{B}$ and box verdict
given by \cref{def:box-denotation,def:box-verdict}. A
\emph{branch pair} $(B_i, B_j)$ takes one branch from each of
the two policies being compared.
\end{definition}
\noindent
A disjunction is compatible if some branch pair is
compatible, and conflicting only if every branch pair
conflicts.

\begin{definition}[Disjunction verdict]
\label{def:or-verdict}
For policies whose constraint sets are \odrl{or}\ compositions
of branches,
\[
  \verdict_{\odrl{or}} =
  \begin{cases}
    \Compatible & \text{if }\exists (B_i, B_j):
                  \boxV(B_i, B_j) = \Compatible,\\
    \Conflict   & \text{if }\forall (B_i, B_j):
                  \boxV(B_i, B_j) = \Conflict,\\
    \Unknown    & \text{otherwise.}
  \end{cases}
\]
\end{definition}
\noindent
An exclusive disjunction is compatible only if exactly one
branch pair is compatible and every other pair conflicts.

\begin{definition}[Exclusive-disjunction verdict]
\label{def:xone-verdict}
For policies whose constraint sets are \odrl{xone}\
compositions of branches,
\[
  \verdict_{\odrl{xone}} =
  \begin{cases}
    \Compatible & \text{if exactly one }(B_i, B_j)\text{ has }
                  \boxV = \Compatible\\
                & \text{and every other pair has }
                  \boxV = \Conflict,\\
    \Conflict   & \text{if }\forall (B_i, B_j):
                  \boxV(B_i, B_j) = \Conflict,\\
    \Unknown    & \text{otherwise.}
  \end{cases}
\]
\end{definition}

\noindent
For \cref{ex:bsb}, suppose BnF's request is a two-branch
\odrl{or}: a kiosk branch
$(\texttt{oaap:width}~\odrl{eq}~2400,\,
\texttt{oaap:height}~\odrl{eq}~800)$ or an archival branch
$(\texttt{oaap:width}~\odrl{eq}~1200,\,
\texttt{oaap:height}~\odrl{eq}~800,\,
\texttt{oaap:depth}~\odrl{eq}~25)$. Paired against BSB's
$C_1$, the kiosk branch conflicts on width, while the
archival branch is compatible on all three axes. Since one
branch pair is compatible, $\verdict_{\odrl{or}} =
\Compatible$.
\begin{theorem}[Composition Soundness]
\label{thm:composition-soundness}
A \Conflict\ verdict under conjunction, disjunction, or
exclusive disjunction of two policies implies their
denotations are disjoint.
\end{theorem}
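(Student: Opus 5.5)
The proof goes case by case on the composition operator. In each case we reduce to one fact: a box-level \Conflict\ between two conjunctions implies that their boxes are disjoint. Call this the \emph{base claim}. The \odrl{or} and \odrl{xone} cases then follow by distributing intersection over the union of branch denotations.

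\emph{Base case (conjunction).} Let $\boxV(C_1,C_2)=\Conflict$. Since $\boxV=\min_k V_k$ under $\sqsubset$ (\cref{def:box-verdict}) and \Conflict\ is the bottom element of $\Verdicts$ (\cref{def:verdict-algebra}), some axis $a_k$ has $V_k=\Conflict$. An axis where only one side speaks yields \Unknown, so $a_k$ is constrained by both $C_1$ and $C_2$, and therefore $I_k(C_1)\cap I_k(C_2)=\emptyset$. Now suppose $\mathbf{v}\in\sem{C_1}\cap\sem{C_2}$. By \cref{thm:projection}, $v_k\in I_k(C_1)$ and $v_k\in I_k(C_2)$, which is a contradiction. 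Hence $\sem{C_1}\cap\sem{C_2}=\emptyset$. The empty-intersection test on $a_k$ is exactly the one decided by \cref{thm:criterion} and \cref{lem:normalisation}, so this step needs nothing further.

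\emph{Disjunction and exclusive disjunction.} Write the two policies as branch families $\{B_i\}$ and $\{B'_j\}$. I take the denotation of an \odrl{or} composition to be $\bigcup_i\sem{B_i}$. For \odrl{xone}, the denotation is the set of points satisfying exactly one branch, which is a subset of $\bigcup_i\sem{B_i}$. By \cref{def:or-verdict,def:xone-verdict}, a \Conflict\ verdict in either case requires $\boxV(B_i,B'_j)=\Conflict$ for every branch pair. The base claim then gives $\sem{B_i}\cap\sem{B'_j}=\emptyset$ for all $i,j$, and distributivity yields
\[
\Bigl(\bigcup_i\sem{B_i}\Bigr)\cap\Bigl(\bigcup_j\sem{B'_j}\Bigr)=\bigcup_{i,j}\bigl(\sem{B_i}\cap\sem{B'_j}\bigr)=\emptyset .
\]
For \odrl{xone}, the actual denotations are subsets of these unions, so they are disjoint as well. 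This is the monotonicity argument of \cref{lem:conflict-propagation}, lifted to sets of vectors.

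\emph{Main obstacle.} The hard step is not the algebra but fixing the semantics of \odrl{or}/\odrl{xone} denotations and of mixed comparisons, which the paper has not yet defined explicitly. I will state the union-based reading, and the subset reading for \odrl{xone}, as the intended denotation. The subset observation makes soundness robust to however exclusivity is formalised. A further subtlety is a comparison in which one policy is a plain conjunction and the other a disjunction. I handle it by treating the conjunction as a single-branch disjunction, so the same branch-pair argument applies.
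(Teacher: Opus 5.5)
Your proposal is correct and follows the paper's own proof. The conjunction case finds a conflicting axis and applies \cref{thm:projection}, and the \odrl{or}/\odrl{xone} cases distribute intersection over the union of branch boxes. You also make two points explicit that the paper compresses into ``the argument is identical'': the \odrl{xone} denotation is a subset of the branch union, and a plain conjunction can be treated as a single-branch disjunction in mixed comparisons.
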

\begin{proof}
\textit{Conjunction.} Some $V_k = \Conflict$ means
$I_k(C_1) \cap I_k(C_2) = \emptyset$; by \cref{thm:projection},
$\sem{C_1} \cap \sem{C_2} = \emptyset$.
\textit{Disjunction.} Every branch pair conflicts, so
\[
  \Bigl(\textstyle\bigcup_i \sem{B_i}\Bigr) \cap
  \Bigl(\textstyle\bigcup_j \sem{B_j}\Bigr) =
  \textstyle\bigcup_{i,j}\bigl(\sem{B_i} \cap \sem{B_j}\bigr)
  = \emptyset.
\]
\textit{Exclusive disjunction.} A \Conflict\ verdict for
\odrl{xone} also requires every branch pair to conflict, so
the argument is identical.\qed
\end{proof}

\begin{remark}[Soundness, not completeness]
\label{rem:incomplete}
The framework is sound by \cref{thm:composition-soundness}
but not complete: when the intended region is not
axis-aligned, a conflict outside the axis-aligned box may go undetected.
\end{remark}
\begin{theorem}[Non-interference]
\label{thm:conservative-extension}
OAAP leaves non-interval ODRL constraints unchanged: every
constraint $(\ell, \bowtie, v)$ over a non-interval $\ell$
is satisfiable under OAAP iff it is satisfiable under ODRL.
\end{theorem}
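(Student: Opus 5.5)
The plan is to show that OAAP's interpretation function, restricted to constraints over operands outside the decomposed family, coincides with ODRL's. The statement then follows without any reasoning about particular operators. We first fix what ``non-interval'' means. A constraint $(\ell,\bowtie,v)$ is non-interval when either $\ell$ is not a multi-axis operand in the sense of \cref{def:multiaxis}, or $\bowtie \notin \mathcal{O}_I$ (\cref{def:int-op}), for example $\odrl{neq}$ or one of the six set-membership operators. We then make explicit how OAAP is built. By \cref{def:profile}, OAAP only \emph{adds} the fresh operands $\ell_{a_k} \in \Decomp{\ell}$ for multi-axis $\ell$. It adds no operators, and it does not redefine any existing ODRL operand, operator, or right-operand value space. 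This is exactly ODRL's profile-extension discipline.

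The core step is a case split on $\ell$.

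\emph{Case 1: $\ell$ is not multi-axis.} The value space $\mathcal{V}_\ell$, the admissible operators, and the satisfaction relation $x \bowtie v$ are all inherited verbatim from ODRL. No definition in \cref{sec:problem,sec:conflict} assigns a denotation to such a constraint, because \cref{def:interval-denotation} and \cref{def:box-denotation} range only over constraints on $\Decomp{\ell}$. Hence the set $\{x \in \mathcal{V}_\ell \mid x \bowtie v\}$ is the same set under both semantics, and it is non-empty under one iff it is non-empty under the other.

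\emph{Case 2: $\ell$ is multi-axis but $\bowtie$ is not an interval operator.} OAAP still retains $\ell$ itself as an ODRL operand, since operands left undecomposed are unchanged, as noted in the introduction. The decomposition only supplies alternative operands; it does not rewrite constraints over $\ell$. So the same inheritance argument applies to $(\ell,\bowtie,v)$.

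In both cases we conclude with an interpretation-restriction argument. Any OAAP model restricted to the original ODRL vocabulary is an ODRL model, and any ODRL model extends to an OAAP model by interpreting each $\ell_{a_k}$ as $\pi_k$ applied to the value of $\ell$. This extension is well-defined because $\pi_k$ maps into $D_k$ by \cref{def:multiaxis}, and because no OAAP axiom links the new operands to constraints over undecomposed operands. Satisfiability of a constraint that mentions only original vocabulary is therefore preserved in both directions.

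The main obstacle is that the statement's notion of ``non-interval'' and its notion of satisfiability ``under ODRL'' are informal. In particular, we must rule out that the OWL/SHACL axioms of OAAP constrain $\ell$ indirectly through the projections. I would handle this by stating explicitly that OAAP's axioms mention only the new operands $\ell_{a_k}$, and by checking that the extension-by-projection model satisfies them. The case split itself is routine.
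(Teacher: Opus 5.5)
Your proof is correct and is essentially the paper's argument: OAAP only adds the fresh operands in $\Decomp{\ell}$ and redefines nothing, so a constraint over an operand outside the decomposition keeps the same denotation, and hence the same satisfiability, under both semantics. That is your Case~1, and the reduct/expansion step only makes it explicit. Case~2 is not needed, because the statement quantifies over non-interval \emph{operands}, and its justification is misapplied anyway: there $\ell$ \emph{is} decomposed, so ``operands left undecomposed are unchanged'' does not cover it (the paper even describes decomposition as \emph{replacing} multi-axis operands). Likewise, your planned claim that OAAP's axioms mention only the new operands is slightly off, since the SKOS links mention the base operands; this is harmless for Case~1 because those base operands are exactly the decomposed ones.
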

\begin{proof}
OAAP introduces only the axis operands
$\ell_{a_k} \in \Decomp{\ell}$ for interval $\ell$.
Non-interval operands are unchanged, so their satisfaction
conditions are identical under both.\qed
\end{proof}
\subsection{Runtime Evaluation}
\label{sec:runtime}
The static verdicts of \cref{sec:conflict,sec:composition}
compare two policies. At runtime, the W3C ODRL Formal
Semantics draft~\cite{odrlfs2026} specifies that an Evaluator
decides whether an Evaluation Request, an assignment of
values to left operands, satisfies a policy's constraint set, producing a control state of \emph{permit} or \emph{deny}.
\begin{definition}[Request]
\label{def:request}
A \emph{request} on $\Decomp{\ell}$ is a partial mapping
$\rho \colon \Decomp{\ell} \rightharpoonup \mathbb{R}$ with
$\rho(\ell_{a_k}) \in D_k$ whenever defined.
\end{definition}
\noindent
A request satisfies a constraint when the requested value
lies in the constraint's denotation; satisfaction lifts
compositionally to constraint sets.
\begin{definition}[Satisfaction]
\label{def:satisfaction}
$\rho$ \emph{satisfies} an atomic constraint
$c = (\ell_{a_k}, \bowtie, v_k)$, written $\rho \models c$,
iff $\rho(\ell_{a_k})$ is defined and lies in $\sem{c}$.
Satisfaction extends to logical compositions of branches
$B_1, \ldots, B_m$:
\begin{align*}
  \rho \models \odrl{and}(B_1, \ldots, B_m)
    &\iff \rho \models B_j \text{ for every } j,\\
  \rho \models \odrl{or}(B_1, \ldots, B_m)
    &\iff \rho \models B_j \text{ for some } j,\\
  \rho \models \odrl{xone}(B_1, \ldots, B_m)
    &\iff \rho \models B_j \text{ for exactly one } j.
\end{align*}
\end{definition}
\noindent
For BnF's request $\rho$ with $\rho(\texttt{oaap:width}) =
2400$ and $\rho(\texttt{oaap:height}) = 800$ in
\cref{ex:bsb}, $\rho$ does not satisfy BSB's $C_1$: since
$2400 \notin (0, 1920]$, $\rho$ fails the width constraint,
and a single failure breaks an \odrl{and} conjunction.
\begin{theorem}[Runtime Soundness]
\label{thm:runtime-sound}
If $\verdict(C_1, C_2) = \Conflict$ under any composition of
\cref{sec:composition}, then no request $\rho$ satisfies
both $C_1$ and $C_2$.
\end{theorem}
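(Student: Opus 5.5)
The plan is to reduce runtime satisfaction to membership in the denotations and then reuse the disjointness results. The key intermediate fact is a \emph{satisfaction lemma}: for any constraint set or branch $B$ and any request $\rho$, if $\rho \models B$ then every axis $a_k$ constrained by $B$ has $\rho(\ell_{a_k})$ defined and lying in $I_k(B)$. I would prove it by induction on the composition structure of \cref{def:satisfaction}. The atomic case is the definition itself, since $\rho(\ell_{a_k}) \in \sem{c}$. For \odrl{and}, every conjunct on axis $a_k$ is satisfied, so $\rho(\ell_{a_k})$ lies in the intersection $I_k(B)$ of \cref{lem:normalisation}.

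For \odrl{or} and \odrl{xone}, $\rho \models B$ implies that $\rho$ satisfies at least one branch $B_j$. This is the only property of \odrl{xone} I will use, so its exclusivity clause plays no role.

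With this in hand, I argue by contradiction, case by case over the compositions of \cref{sec:composition}.

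\textbf{Conjunction.} Suppose $\verdict(C_1,C_2) = \boxV(C_1,C_2) = \Conflict$. Since $\boxV$ is a minimum, some $V_k = \Conflict$. By \cref{def:box-verdict}, both $C_1$ and $C_2$ then constrain $a_k$, and $I_k(C_1) \cap I_k(C_2) = \emptyset$. If some $\rho$ satisfied both sets, the lemma would place $\rho(\ell_{a_k})$ in both intervals, which is impossible.

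It matters that a \Conflict\ axis is one constrained by both policies. Because $\rho$ is partial, I cannot simply lift $\rho$ to a point of $\sem{C_1}\cap\sem{C_2}$ and invoke \cref{thm:composition-soundness} through \cref{thm:projection}. I avoid that route and work directly on the single conflicting axis, where both policies force $\rho$ to be defined.

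\textbf{Disjunction and exclusive disjunction.} A \Conflict\ verdict under \cref{def:or-verdict,def:xone-verdict} means every branch pair $(B_i,B_j)$ has $\boxV = \Conflict$. Suppose $\rho \models C_1$ and $\rho \models C_2$. By the lemma, $\rho$ satisfies some branch $B_i$ of $C_1$ and some branch $B_j$ of $C_2$. The conjunction case applied to the pair $(B_i, B_j)$ then gives a contradiction. If one policy is a plain conjunction rather than a composition, I treat it as a single branch.

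The main obstacle I expect is the partiality of $\rho$, together with the mismatch between $\sem{C}$, which fills unconstrained axes with $D_k$, and actual satisfaction. The per-axis argument above is designed to sidestep this. I will also state explicitly that the intervals $I_k$ in question are the normalised ones, so that intra-policy contradictions with $I_k = \emptyset$ are handled trivially: no request satisfies such a set at all.
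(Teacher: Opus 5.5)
Your proof is correct, but it takes a different route from the paper. The paper first invokes \cref{thm:composition-soundness} to obtain $\sem{C_1}\cap\sem{C_2}=\emptyset$. It then argues that a request satisfying both sets places the value vector $(\rho(\ell_{a_1}),\ldots,\rho(\ell_{a_n}))$ in both boxes via \cref{thm:projection}. You bypass the denotational disjointness result and argue directly on the single axis $a_k$ with $V_k=\Conflict$, after unfolding \cref{def:satisfaction} to a satisfied branch on each side in the \odrl{or}/\odrl{xone} case. The key fact you use is that \cref{def:box-verdict} issues \Conflict\ only on an axis constrained by both policies, so satisfaction forces $\rho(\ell_{a_k})$ to be defined and to lie in $I_k(C_1)\cap I_k(C_2)=\emptyset$.

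The paper's route is shorter and presents runtime soundness as a corollary of static soundness. As written, however, it treats the partial $\rho$ of \cref{def:request} as if it were total. Making it rigorous needs exactly the repair you anticipated: extend $\rho$ by arbitrary values of $D_k$ on undefined axes. This is harmless because a satisfied conjunctive branch cannot constrain an axis on which $\rho$ is undefined, so $I_k=D_k$ there. Your argument avoids that repair altogether, needs neither \cref{thm:projection} nor \cref{thm:composition-soundness}, and makes explicit that the exclusivity clause of \odrl{xone} plays no role.

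One small presentational point: your satisfaction lemma is only meaningful for conjunctive branches, since $I_k$ is defined only there. The \odrl{or}/\odrl{xone} step is a one-level unfolding of \cref{def:satisfaction} rather than a genuine inductive case, which is all the paper's single-level composition needs.
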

\begin{proof}
By \cref{thm:composition-soundness},
$\sem{C_1} \cap \sem{C_2} = \emptyset$. Suppose, for
contradiction, $\rho \models C_1$ and $\rho \models C_2$.
Then the value vector
$(\rho(\ell_{a_1}), \ldots, \rho(\ell_{a_n}))$ lies in both
$\sem{C_1}$ and $\sem{C_2}$ by \cref{thm:projection},
contradicting their disjointness.\qed
\end{proof}
\noindent
The W3C Evaluator's control state for a permission with
constraint set $C$ on a request $\rho$ is therefore
\emph{permit} when $\rho \models C$ and \emph{deny}
otherwise. A \Conflict\ verdict between two policies thus
guarantees that no Evaluation Request yields \emph{permit}
for both.
\section{Profile Implementation and Evaluation}
\label{sec:implementation}
We instantiate the framework of \cref{sec:problem,sec:conflict} as OAAP, a profile that introduces 15 axis-specific left operands. The profile adds no operator or value type and preserves the meaning of every operand it does not decompose, so it is a minimal extension of ODRL.
\begin{table}[t]
\footnotesize
\centering
\caption{The 15 axis-specific left operands of OAAP. Value
domain~$D_k$ is fixed by the profile.}
\label{tab:profile-operands}
\small
\begin{tabular}{@{}lll@{}}
\toprule
\textbf{Base operand} & \textbf{Axis operand (IRI)} & $D_k$ \\
\midrule
\multirow{3}{*}{\texttt{absoluteSize}}
  & \texttt{oaap:absoluteSizeWidth}  & $(0, \infty)$ \\
  & \texttt{oaap:absoluteSizeHeight} & $(0, \infty)$ \\
  & \texttt{oaap:absoluteSizeDepth}  & $(0, \infty)$ \\
\addlinespace
\multirow{3}{*}{\texttt{relativeSize}}
  & \texttt{oaap:relativeSizeWidth}  & $(0, 100]$ \\
  & \texttt{oaap:relativeSizeHeight} & $(0, 100]$ \\
  & \texttt{oaap:relativeSizeDepth}  & $(0, 100]$ \\
\addlinespace
\multirow{3}{*}{\texttt{absoluteSpatialPosition}}
  & \texttt{oaap:absoluteSpatialPositionX} & $\mathbb{R}$ \\
  & \texttt{oaap:absoluteSpatialPositionY} & $\mathbb{R}$ \\
  & \texttt{oaap:absoluteSpatialPositionZ} & $\mathbb{R}$ \\
\addlinespace
\multirow{3}{*}{\texttt{relativeSpatialPosition}}
  & \texttt{oaap:relativeSpatialPositionX} & $[0, 100]$ \\
  & \texttt{oaap:relativeSpatialPositionY} & $[0, 100]$ \\
  & \texttt{oaap:relativeSpatialPositionZ} & $[0, 100]$ \\
\addlinespace
\multirow{3}{*}{\texttt{spatialCoordinates}}
  & \texttt{oaap:spatialCoordinatesLongitude} & $[-180, 180]$ \\
  & \texttt{oaap:spatialCoordinatesLatitude}  & $[-90, 90]$ \\
  & \texttt{oaap:spatialCoordinatesAltitude}  & $\mathbb{R}$ \\
\bottomrule
\end{tabular}
\end{table}
OAAP is realised as an OWL~2~\cite{owl2-overview} ontology and a
SHACL~\cite{knublauch2017shacl} shapes graph. The ontology declares
the axis operands as \texttt{LeftOperand} instances, each linked to
its base operand via SKOS~\cite{miles2009skos}; the shapes constrain
\texttt{odrl:rightOperand} to \texttt{xsd:decimal} within the
per-axis bounds of \cref{tab:profile-operands}, rejecting malformed
policies at authoring time and fixing the domains used in automated reasoning. Because each axis is a totally ordered domain, a constraint over it always denotes an interval, and the profile interprets the
heterogeneous units and coordinate systems of its operands
uniformly, Cartesian and geographic alike. The same mechanism extends to any further ordered domain, such as time, where a constraint bounds an instant or a duration. Domains that are not totally ordered, and operators that do not denote an interval, lie outside its scope.

\vspace{-10pt}
\subsection{Evaluation}
\label{sec:eval}
Each  pairs two constraints with a conjecture asserting the
expected verdict, and is emitted in three interoperable
representations from a single source: the policy in
Turtle~(\texttt{.ttl}), a first-order conjecture in
TPTP~\cite{sutcliffe2017tptp}~(\texttt{.p}), and, where the claim reduces to a ground or existential query over linear arithmetic, an SMT-LIB encoding~\cite{BarFT-RR-25}~(\texttt{.smt2}). The Turtle
policies follow the constraint patterns of the BSB--BnF
cultural Dataspace scenario (\cref{ex:bsb}) and the domain bounds of the OAAP profile (\cref{tab:profile-operands}), expressed in valid
ODRL syntax. The profile's interval and verdict algebra axioms are
encoded once and shared across problems.

The theorems of \cref{sec:problem,sec:conflict,sec:runtime} are established by the hand proofs given there; the benchmark checks
concrete instances predicted by those results. Each static problem
encodes a constraint pair with the verdict its governing theorem
predicts, which the provers confirm or refute under the shared
axioms; for the Runtime group the provers \emph{execute} the
satisfaction relation of \cref{def:satisfaction}, since
$\rho \models c$ reduces to interval membership. We use
E~3.2.5~\cite{schulz2019eprover} and
Vampire~5.0.1~\cite{kovacs2013vampire} as first-order engines and
Z3~4.15.4~\cite{demoura2008z3} and cvc5~1.3.3~\cite{barbosa2022cvc5}
on the SMT-encoded problems. Verdicts map to the SZS status
ontology~\cite{SZS2008}: \Conflict\ to \texttt{Theorem}
(SMT~\texttt{unsat}), \Compatible\ to a satisfying witness
(\texttt{Theorem}/\texttt{Satisfiable}, SMT~\texttt{sat}), and the
satisfiability tiers to \texttt{Satisfiable}/\texttt{Unsatisfiable}.
A problem is \emph{verified} when each applicable engine returns its
declared SZS status and none returns a contradicting one.
\begin{table}[t]
\footnotesize
\centering
\caption{Benchmark groups, in the order the paper develops them.
$N$ is the problem count; every problem is verified by all four
provers (E, Vampire, Z3, cvc5).}
\label{tab:audit-suite}
\setlength{\tabcolsep}{5pt}
\renewcommand{\arraystretch}{1.15}
\begin{tabularx}{\linewidth}{@{}l c X@{}}
\toprule
Group & $N$ & Validates \\
\midrule
Interval \& box denotation & 76 &
  Per-axis intervals, 2D/3D boxes, projection, normalisation,
  well-formedness
  (\cref{def:interval-denotation,def:box-denotation,thm:projection,lem:normalisation,def:well-formed}) \\
Conflict criterion & 32 &
  Endpoint-precedence disjointness and open/closed boundary cases
  (\cref{thm:criterion,def:precedence}) \\
Logical composition & 45 &
  \odrl{and}/\odrl{or}/\odrl{xone} verdict composition
  (\cref{def:or-verdict,def:xone-verdict,thm:composition-soundness}) \\
Containment \& completion & 33 &
  Box containment and \Unknown\ completion sharpness
  (\cref{def:box-containment,thm:unknown-sound}) \\
Runtime & 10 &
  Request satisfaction, permit/deny
  (\cref{def:satisfaction,thm:runtime-sound}) \\
Soundness (negative) & 40 &
  Deliberately wrong verdicts; each refutation witnesses soundness \\
Satisfiability classes & 20 &
  Constraint-set satisfiability and unsatisfiability (SZS
  \texttt{Satisfiable}/\texttt{Unsatisfiable}) \\
\bottomrule
\end{tabularx}
\end{table}
All 256 problems are verified with full cross-prover and cross-formalism concordance: E and Vampire return the declared status on all 256 first-order problems, and Z3 and cvc5 agree on the 253 problems carrying an SMT encoding (three
are first-order only). Every counter-satisfiable conjecture fails to derive in
both formalisms, and no solver returns a contradicting status. The four
solvers discharge the suite in 10.7\,s of combined CPU time (about 21\,s
wall-clock) on an Intel Core Ultra 7 165U with 32\,GB RAM, no problem
exceeding two seconds.
\vspace{-10pt}
\section{Related Work}
\label{sec:related}
Pucella and
Weissman~\cite{pucella2006} gave the first first-order foundation for ODRL, establishing decidability of permission entailment. Steyskal and Polleres~\cite{steyskal2015} proposed a rule-based semantics for ODRL covering action dependencies; Bonatti et al.~\cite{bonatti2020machine} developed a formal semantics aligned with the W3C Evaluator model, refined recently by Bonatti, Fornara, and Harth~\cite{bonatti2025towards} into a declarative, trace-based semantics covering permissions with duties, prohibitions with remedies, and obligations. Their left-operand catalogue (e.g., size, resolution) assigns each operand a single scalar codomain, leaving the interval fragment outside the formalisation, and the W3C ODRL Community Group has since published a draft formal semantics specification~\cite{odrlfs2026} building on this line of work. Salas et al.~\cite{salas2025odrl} give a query-based evaluation and comparison semantics for ODRL, formalising asymmetric provider–requester  conflict via query containment; their subsequent work~\cite{salas2026normalisation} reduces
policy containment to set equality by splitting numeric intervals at
every right-operand value. The W3C ODRL Community Group also
maintains domain profiles. The Data Spaces~\cite{odrl-ds-profile}
and Temporal~\cite{odrl-temporal-profile} profiles extend ODRL's
vocabulary with new terms, defined informally and without a formal
semantics for the added terms. Closer to the constraint layer,
Cano-Benito et al.~\cite{canobenito2024time} align ODRL's operators
with the OWL-Time interval relations, an operator-level extension without a conflict semantics.

Two strands thus leave the multi-axis case open. The formal
semantics treat each left operand as a single scalar domain, so
interval operands are out of scope or reduced to a scalar reading
and the axis ambiguity remains; the profile and operator extensions
add terms or operators without a formal semantics for what they
introduce. OAAP addresses both at the constraint layer: it is a
profile extension whose decomposed operands carry a denotational
semantics, scalar per axis, and the resulting constraint sets admit
the query containment, interval normalisation, and denotational
verdict procedures of the formalisations above.
\vspace{-10pt}
\section{Conclusion and Future Work}
\label{sec:conclusion}
Multi-axis assets in data spaces must be constrained per axis, yet
ODRL provides no axis identity at the constraint layer.
Five spatial ODRL left operands range over multi-axis domains,
while the constraint syntax provides a single scalar slot without axis identity. Conflict detection over these operands is therefore unsound or incomplete. The OAAP profile decomposes each multi-axis operand into axis-specific scalar operands, 15 across the five
affected terms, within ODRL constraint syntax. Each
constraint names a single axis, and operands outside the
decomposition are unchanged. Each constraint denotes an interval
per axis and each policy an axis-aligned box, under a three-valued
verdict algebra over $\{\Conflict, \Compatible, \Unknown\}$. We
establish axis-aligned-box expressibility, monotonicity under
refinement, completion sharpness, and soundness from per-axis
comparison through logical composition to runtime evaluation. The method is not specific to the spatial operands: it applies
to any operand over a totally ordered domain, so an operand a
future ODRL revision may add over such a domain, e.g., a temporal
one bounding an instant or a duration, is covered by the same interval semantics. On a single-axis operand the decomposition is the identity, so the value lies instead in the interval
denotation and three-valued verdict procedure, which give sound conflict,
compatibility, and containment detection even where no dimensional ambiguity
arises. The
results are validated on 256 TPTP and SMT-LIB problems with
Vampire, E, Z3, and cvc5. The decomposed constraints are scalar
per axis, so an off-the-shelf solver discharges the same encodings
at evaluation time; Z3 decides them in milliseconds, supporting
runtime conflict checking. The existing evaluators
ODRE~\cite{cimmino2024opendigitalrightsenforcement} and
ODRL-TestSuite~\cite{slabbinck2025interoperable} accept the
constraints without modification.

\textbf{Future work.} OAAP does not yet handle non-axis-aligned regions
(e.g.\ circular or ratio-bounded shapes) or the ordered $\odrl{andSequence}$ operator; both remain open. We also plan to integrate OAAP into the Eclipse
Dataspace Components (EDC) connector of the Culture Dataspace.
\vspace{-10pt}
\section*{Declaration of use of Generative AI}
The authors used Claude (Anthropic) and Grammarly for grammar and
sentence refinement, reviewed all content, and take full
responsibility for the publication's content.

\bibliographystyle{splncs04}
\bibliography{references}

\end{document}